\newcommand{\A}{\mathcal{A}}
\newcommand{\R}{\mathbb{R}}
\newcommand{\E}{\mathbb{E}}
\newcommand{\U}{\mathcal{U}}
\newcommand{\Ftrue}{\mathcal{F}^\dagger}
\newcommand{\F}{\mathcal{F}}
\newcommand{\LL}{\mathcal{L}}
\newtheorem{theorem}{Theorem}
\newtheorem{lemma}[theorem]{Lemma}
\newtheorem{proposition}[theorem]{Proposition}
\title{
Neural Operator: Graph Kernel Network \\
for Partial Differential Equations
}
\author{Zongyi Li, Nikola Kovachki,  Kamyar Azizzadenesheli, Burigede Liu, \\
Kaushik Bhattacharya,  Andrew Stuart, Anima Anandkumar}
\date{\today}
\begin{document}

\maketitle

\begin{abstract}

The classical development of neural networks has been primarily for mappings between a finite-dimensional Euclidean space and a set of classes, or between two finite-dimensional Euclidean spaces.
The purpose of this work is to generalize neural networks so that they can learn mappings between infinite-dimensional spaces (operators). 
The key innovation in our work is that a single set of network parameters, within a carefully designed network architecture, may be used to describe mappings between infinite-dimensional spaces {\it and} between different finite-dimensional approximations of those spaces.
We formulate approximation of the infinite-dimensional mapping by composing nonlinear activation functions and a class
of integral operators. The kernel integration is computed by message passing on graph networks. This approach has substantial practical consequences which we will illustrate in the context of mappings between input data to partial differential equations (PDEs) and their solutions. 
In this context, such learned networks can generalize among different approximation methods for the PDE (such as finite difference or finite element methods) and among
approximations corresponding to different underlying levels of resolution and discretization.
Experiments confirm that the proposed graph kernel network does have the desired properties and show competitive performance compared to the state of the art solvers.  

\end{abstract}

\section{INTRODUCTION}
There are numerous applications in which it is desirable to
learn a mapping between Banach spaces. In particular, either
the input or the output space, or both, may be infinite-dimensional.
The possibility of learning such mappings opens up a new class of
problems in the design of neural networks, with widespread potential
applicability. New ideas are required to build on traditional 
neural networks which are mappings from finite-dimensional Euclidean spaces
into classes, or into another finite-dimensional Euclidean space.
We study the development of neural networks in the setting in which 
the input and output spaces comprise real-valued functions defined 
on subsets in $\mathbb{R}^d.$

\subsection{Literature Review And Context}

We formulate a new class of neural networks, which are defined
to map between spaces of functions on a bounded open set $D$ in $\mathbb{R}^d.$  
Such neural networks, once trained, have the important property that they
are discretization invariant, sharing the same network parameters
between different discretizations. In contrast, standard neural network architectures depend heavily on the discretization and have difficulty
in generalizing between different grid representations. Our methodology
has an underlying Nystr\"om approximation formulation \cite{nystrom1930praktische} 
which links
different grids to a single set of network parameters. We illustrate
the new conceptual class of neural networks within the context of
partial differential equations, and the mapping between input data
(in the form of a function) and output data (the function which solves
the PDE). Both supervised and semisupervised settings are considered.

In PDE applications, the defining equations are often local, whilst the solution operator has non-local effects which, nonetheless, decay. Such non-local effects can be
described by integral operators with graph approximations of Nystr\"om type \cite{belongie2002spectral}
providing a consistent way of connecting different grid or data structures arising in
computational methods. For this reason, graph networks hold great potential for the solution
operators of PDEs, which is the point of departure for our work.

\paragraph{Partial differential equations (PDEs).} A wide range of important engineering and physical problems are governed by PDEs. Over the past few decades, significant progress has been made on formulating \cite{gurtin1982introduction} and solving \cite{johnson2012numerical} the governing PDEs in many scientific fields from micro-scale problems (e.g., quantum and molecular dynamics) to macro-scale applications (e.g., civil and marine engineering). Despite the success in the application of PDEs to solve real-life problems,  two significant challenges remain.  First, identifying/formulating the underlying PDEs appropriate for the modeling of a specific problem usually requires extensive prior knowledge in the corresponding field which is then combined with universal
conservation laws to design a predictive model; for example, modeling the deformation and fracture of solid structures requires detailed knowledge on the relationship between stress and strain in the constituent material. For complicated systems such as living cells, acquiring such knowledge is often elusive and formulating the governing PDE for these systems remains prohibitive; the possibility of
learning such knowledge from data may revolutionize such fields. Second, solving complicated non-linear PDE systems (such as those arising in turbulence and plasticity) is computationally demanding; again the
possibility of using instances of data from such computations to design fast approximate solvers
holds great potential. In both these challenges, if neural networks are to play a role in exploiting
the increasing volume of available data, then there is
a need to formulate them so that they are well-adapted to mappings between function spaces.

We first outline two major neural network based approaches for PDEs.
We consider PDEs of the form
 \begin{align}
\label{eq:generalpde0}
\begin{split}
(\LL_a u)(x) &= f(x), \qquad x \in D\ \\
u(x) &= 0, \qquad \quad \:\: x \in \partial D,
\end{split}
\end{align}
with solution $u: D \to \mathbb{R}$, and parameter $a:D \to \mathbb{R}$ entering
the definition of $\LL_a$. The domain $D$ is discretized into $K$ points (see Section \ref{sec:problemsetting})
and $N$ training pairs of coefficient functions and (approximate) solution functions \(\{a_j, u_j\}_{j=1}^N\) are used to design a neural network.
The first approach parametrizes the solution operator as a deep convolutional neural network between finite Euclidean space \(\F : \R^K \times \Theta \to \R^K\) \cite{guo2016convolutional, Zabaras, Adler2017, bhatnagar2019prediction}. Such an approach is, by definition, not mesh independent and will need modifications to the architecture for different resolution and discretization of \(K\) in order to achieve consistent error (if at all possible). We demonstrate this issue in section \ref{sec:numerics} using the architecture of \cite{Zabaras} which was designed for the solution of \eqref{eq:ellptic} on a uniform \(64 \times 64\) mesh. Furthermore, this approach is limited to the discretization size and geometry of the training data hence it is not possible to query solutions at new points in the domain. In contrast we show, for our method, both invariance of the error to grid resolution, and the ability to transfer the solution between meshes in section \ref{sec:numerics}. 

The second approach directly parameterizes the solution \(u\) as a neural network \(\F : D \times \Theta \to \R\) 
\cite{Weinan, raissi2019physics,bar2019unsupervised}. This approach is, of course, mesh independent since the solution is defined on the physical domain.
However, the parametric dependence is accounted for in a mesh-dependent fashion. 
Indeed, for any given new equation with new coefficient function \(a \), one would need to train a new neural network \(\F_a\). Such an approach closely resembles classical methods such as finite elements, replacing the linear span of a finite set of local basis functions with the space of neural networks. This approach suffers from the same computational issue as the classical methods: one needs to solve an optimization problem for every new parameter. Furthermore, the approach is limited to a setting in which the underlying PDE is known; purely data-driven learning of a map between spaces of functions is not possible. The methodology we introduce circumvents these issues.

Our methodology most closely
resembles the classical reduced basis method \cite{DeVoreReducedBasis} or the method of \cite{cohendevore}. 
Along with the contemporaneous work \cite{Kovachki}, our method, to the best of our knowledge, 
is the first practical deep learning method that is designed to learn maps between infinite-dimensional spaces. It
remedies the mesh-dependent nature of the approach in \cite{guo2016convolutional, Zabaras, Adler2017, bhatnagar2019prediction} by producing a quality of approximation that is invariant to the resolution of the function and it has the ability to transfer solutions between meshes. Moreover it need only be trained once on the equations set  \(\{a_j, u_j\}_{j=1}^N\); then, obtaining a solution for a new \(a \sim \mu\), only requires a forward pass of the network, alleviating the major computational issues incurred in \cite{Weinan, raissi2019physics, herrmann2020deep, bar2019unsupervised}. Lastly, 
our method requires no knowledge of the underlying PDE; the true map \(\Ftrue\) can be treated as a black-box, perhaps trained on experimental data or on the output of a costly computer simulation, not necessarily a PDE.

\paragraph{Graph neural networks.} Graph neural network (GNNs), a class of neural networks that apply on graph-structured data, have recently been developed and seen a variety of applications. Graph networks incorporate an array of techniques such as graph convolution, edge convolution, attention, and graph pooling  \cite{kipf2016semi,hamilton2017inductive,gilmer2017neural,velivckovic2017graph,murphy2018janossy}. 
GNNs have also been applied to the modeling of physical phenomena such as molecules \cite{chen2019graph} and rigid body systems \cite{battaglia2018relational}, as these problems exhibit a natural graph interpretation: the particles are the nodes and the interactions are the edges. 

The work \cite{pmlr-v97-alet19a} performed an initial study that employs graph networks on the problem of learning solutions to Poisson's equation among other physical applications. They propose an encoder-decoder setting, constructing graphs in the latent space and utilizing message passing between the encoder and decoder. However, their model uses a nearest neighbor structure that is unable to capture non-local dependencies as the mesh size is increased.
In contrast, we directly construct a graph in which the nodes are located on the spatial domain of the output function. Through message passing, we are then able to directly learn the kernel of the network
which approximates the PDE solution. When querying a new location, we simply add a new node to our spatial graph and connect it to the existing nodes, avoiding interpolation error by leveraging the power of the Nystr\"om extension for integral operators.

\begin{figure}[t]
    \centering
    \includegraphics[width=12cm]{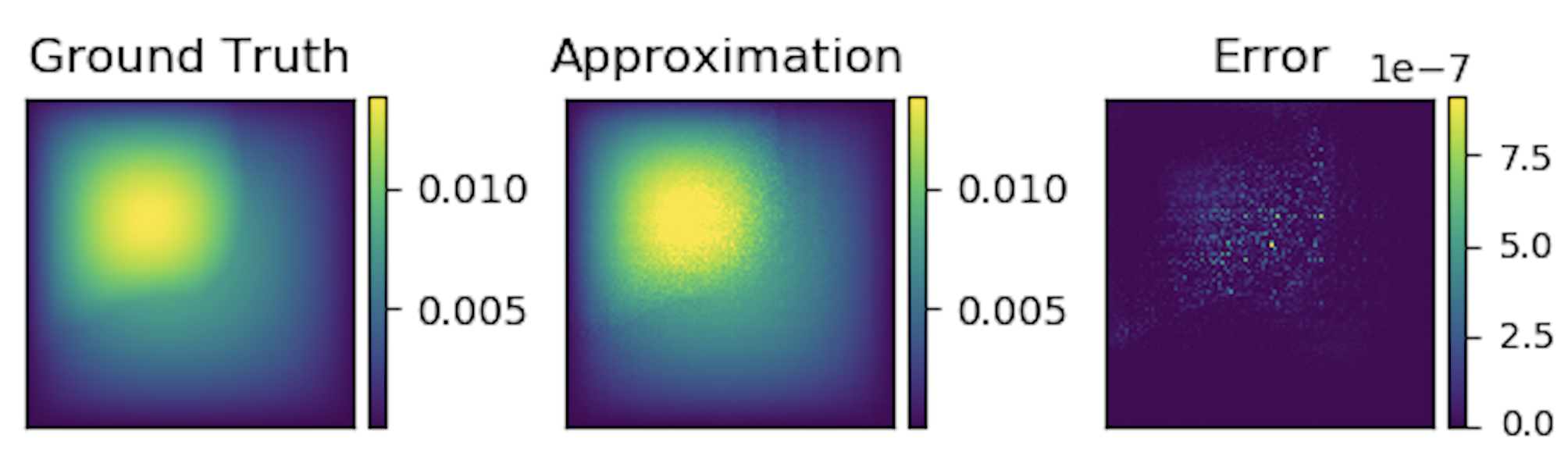}\\
    \label{fig:generalization}
    \small{
    Graph kernel network for the solution of \eqref{eq:ellptic}. It can be trained on a small resolution and will generalize to a large one. Error is point-wise squared error.}
    \caption{Train on $16 \times 16$, test on $241 \times 241$}
\end{figure}

\paragraph{Continuous neural networks.}
The concept of defining neural networks in infinite-dimensional spaces is a central problem that long been studied \cite{Williams, Neal, BengioLeRoux,GlobersonLivni, Guss}. The general idea is to take the infinite-width limit which yields a non-parametric method and has connections to Gaussian Process Regression \cite{Neal,MathewsGP,Garriga-AlonsoGP}, leading to the introduction of
deep Gaussian processes \cite{damianou2013deep,aretha}. Thus far, such methods have not yielded efficient numerical algorithms that can parallel the success of convolutional or recurrent neural networks in finite dimensions in the setting of mappings between function spaces.  Another idea is to simply define a sequence of compositions where each layer is a map between infinite-dimensional spaces with a finite-dimensional parametric dependence. This is the approach we take in this work, going a step further by sharing parameters between each layer.

\subsection{Contributions}


We introduce the concept of Neural Operator and instantiate it through \emph{graph kernel networks}, a novel deep neural network method to learn the mapping between infinite-dimensional spaces of functions defined on bounded open subsets of $\mathbb{R}^d$.

\begin{itemize}
\item Unlike existing methods, our approach is demonstrably able to learn the mapping between function spaces, and is invariant to different approximations and grids,  as demonstrated in Figure \ref{fig:generalization}.

\item We deploy the Nystr\"om extension to connect the neural network on function space to families
of GNNs on arbitrary, possibly unstructured, grids.

\item We demonstrate that the Neural Operator approach has competitive approximation accuracy to classical and deep learning methods. 
    
\item Numerical results also show that the Neural Operator needs only to be trained on a few samples in order to generalize to the whole class of problems. 


\item We show the ability to perform semi-supervised learning, learning from data at only a few points and then generalizing to the whole domain.
\end{itemize}

These concepts are illustrated in the context of a family of elliptic PDEs prototypical
of a number of problems arising throughout the sciences and engineering.



    



\section{PROBLEM SETTING}
\label{sec:problemsetting}
Our goal is to learn a mapping between two infinite dimensional spaces by using a finite
collection of observations of input-output pairs from this mapping: supervised learning. Let \(\A\) and \(\U\) 
be separable Banach spaces and \(\Ftrue : \A \to \U\)
a (typically) non-linear map. Suppose we have observations \(\{a_j, u_j\}_{j=1}^N\) where 
\(a_j \sim \mu\) is an i.i.d. sequence from the probability measure \(\mu\) supported on 
\(\A\) and \(u_j = \Ftrue(a_j)\) is possibly corrupted with noise. We aim to build an approximation of \(\Ftrue\) by 
constructing a parametric map 
\begin{equation}
\label{eq:approxmap}
\F : \A \times \Theta \to \U
\end{equation}
for some finite-dimensional parameter space \(\Theta\) and then choosing
\(\theta^\dagger \in \Theta\) so that \(\F(\cdot, \theta^\dagger) \approx \Ftrue\).

This is a natural framework for learning in infinite-dimensions as one could define a cost functional \(C : \U \times \U \to \R\) and seek a minimizer of the problem
\[\min_{\theta \in \Theta} \E_{a \sim \mu} [C(\F(a,\theta), \Ftrue(a))]\]
which directly parallels the classical finite-dimensional 
setting \cite{Vapnik1998}. Showing the existence of minimizers, in the infinite-dimensional setting, remains a challenging open problem. We will approach this problem in the test-train setting in which
empirical approximations to the cost are used. We conceptualize our methodology in the
infinite-dimensional setting. This means that all finite-dimensional approximations
can share a common set of network parameters which are defined in the (approximation-free) infinite-dimensional setting. To be concrete we will consider infinite-dimensional spaces which are Banach
spaces of real-valued functions defined on a bounded open set in $\mathbb{R}^d$. We then consider
mappings $\Ftrue$ which take input functions to a PDE and map them to solutions of the PDE,
both input and solutions being real-valued functions on $\mathbb{R}^d$.

A common instantiation of the preceding problem is the approximation of 
the second order elliptic PDE
\begin{align}
\label{eq:ellptic}
\begin{split}
- \nabla \cdot (a(x) \nabla u(x))  &= f(x), \quad  x \in D \\
u(x) &= 0, \qquad \:\: x \in \partial D
\end{split}
\end{align}
for some bounded, open set \(D \subset \R^d\) and a fixed function
{\(f \in L^2(D;\R)\)}. This equation is prototypical of PDEs arising in
numerous applications including hydrology \cite{bear2012fundamentals} and elasticity \cite{antman2005problems}. 
For a given  \(a \in \A = L^\infty(D;\R^+) \cap L^2(D;\R^+)\), equation (\ref{eq:ellptic}) has 
a unique weak solution \(u \in \U = H_0^1(D;\R)\) \cite{Evans} 
and therefore we can define the solution operator \(\Ftrue\)
as the map \(a \mapsto u\). Note that while the PDE (\ref{eq:ellptic})
is linear, the solution operator  $\Ftrue$ is not.

Since our data \(a_j\) and \(u_j\) are , in general, functions, to work with them 
numerically, we assume access only to point-wise evaluations. To illustrate this, we will
continue with the example of the preceding paragraph. To this end 
let \(P_K = \{x_1,\dots,x_K\} \subset D\) be a \(K\)-point discretization of the domain
\(D\) and assume we have observations \(a_j|_{P_K}, u_j|_{P_K} \in \R^K\), for a finite
collection  of input-output pairs indexed by $j$.
In the next section, we propose a kernel inspired graph neural network 
architecture which, while trained on the discretized data, can produce an
answer \(u(x)\) for any \(x \in D\) given a new input \(a \sim \mu\).
That is to say that our approach is independent of the discretization
\(P_K\) and therefore a true function space method; we verify this claim numerically by showing invariance of the error as \(K \to \infty\). 
Such a property is highly desirable as it allows a transfer of solutions between different grid geometries and discretization sizes.

We note that, while the application of our methodology is based on having point-wise evaluations of the function, it is not limited by it. One may, 
for example, represent a function numerically as a finite set of truncated basis coefficients. Invariance of the representation would then be with respect to the size of this set. Our methodology can, in principle, be modified to accommodate this scenario through a suitably chosen architecture. We do not pursue this direction in the current work.

\section{GRAPH KERNEL NETWORK}
\label{sec:method}
We propose a graph kernel neural network for the solution of the problem outlined in 
section \ref{sec:problemsetting}. A {\bf table of notations} is included in Appendix \ref{table:notation}. As a guiding principle for our architecture, we take 
the following example. Let \(\LL_a\) be a differential operator depending on a 
parameter \(a \in \A\) and consider the PDE
\begin{align}
\label{eq:generalpde}
\begin{split}
(\LL_a u)(x) &= f(x), \qquad x \in D\ \\
u(x) &= 0, \qquad \quad \:\: x \in \partial D
\end{split}
\end{align}
for a bounded, open set \(D \subset \R^d\) and some fixed function \(f\) living in an appropriate function space determined by the structure of \(\LL_a\). The elliptic operator {\(\LL_a \cdot = - \text{div} (a \nabla \cdot)\)} from equation (\ref{eq:ellptic}) is an example. Under fairly general 
conditions on \(\LL_a\) \cite{Evans}, we may define the Green's function \(G : D \times D \to \R\) as the 
unique solution to the problem 
\[\LL_a G(x, \cdot) = \delta_x\]
where \(\delta_x\) is the delta measure on \(\R^d\) centered at \(x\). Note that \(G\) will depend on the parameter \(a\) thus we will henceforth denote it as \(G_a\). The solution to \eqref{eq:generalpde} can then be represented as
\begin{equation}
\label{eq:generalsolution}
u(x) = \int_D G_a(x,y)f(y) \: dy.
\end{equation}

This is easily seen via the formal computation
\begin{align*}
    (\LL_a u)(x) &= \int_D (\LL_a G(x,\cdot))(y) f(y) \: dy \\
    &= \int_D \delta_x(y) f(y) \: dy \\
    &= f(x)
\end{align*}

Generally the Green's function is continuous at points \(x \neq y\), for example, when
\(\LL_a\) is uniformly elliptic \cite{gilbarg2015elliptic}, hence it is natural to model it via a neural network.  
Guided by the representation \eqref{eq:generalsolution}, we propose the following iterative 
architecture for \(t = 0,\dots,T-1\).  
\begin{align}\label{eq:kernel}
\begin{split}
v_{t+1}(x) = \sigma \bigg ( W v_t(x) 
\!\!+\!\! \int_D \kappa_\phi (x, y, a(x), a(y)) v_t(y) \: \nu_x(dy) \bigg )\!\!\!\! 
\end{split}
\end{align}
where \(\sigma : \R \to \R\) is a fixed function applied element-wise, \(\nu_x\) is a fixed Borel measure for each \(x \in D\). The matrix \(W \in \R^{n \times n}\), together with the parameters $\phi$ 
entering kernel \(\kappa_\phi : \R^{2(d+1)} \to \R^{n \times n}\), are to be learned from data.
We model $\kappa_\phi$ as a neural network mapping $\mathbb{R}^{2(d+1)}$ to $\R^{n \times n}.$

Discretization of the continuum picture may be viewed as replacing Borel measure $\nu_x$ by an
empirical approximation based on the $K$ grid points being used.
In this setting we may view \(\kappa_\phi \) as a \(K\times K\) kernel block matrix, where each entry \(\kappa_\phi(x,y)\) is itself a \(n \times n\) matrix. Each block shares the same set of network
parameters. This is the key to making a method that shares common parameters independent
of the discretization used.

Finally, we observe that, although we have focussed on neural networks mapping $a$ to $u$, generalizations are possible, such as mapping $f$ to $u$, or having non-zero boundary data $g$
on $\partial D$ and mapping $g$ to $u$. More generally one can consider the mapping from
$(a,f,g)$ into $u$ and use similar ideas. Indeed to illustrate ideas we will consider the mapping
from $f$ to $u$ below (which is linear and for which an analytic solution is known)
before moving on to study the (nonlinear) mapping from $a$ to $u$.

\paragraph{Example 1: Poisson equation.} We consider a simplification of the
foregoing in which we study the map from $f$ to $u$. To this end we set \(v_0(x) = f(x)\), \(T=1\), \(n=1\),
\(\sigma(x) = x\), \(W = w = 0\), and \(\nu_x(dy) = dy\) (the Lebesgue measure) in \eqref{eq:kernel}.
We then obtain the representation \eqref{eq:generalsolution} with the Green's 
function \(G_a\) parameterized by the 
neural network \(\kappa_\phi\) with explicit dependence on \(a(x)\), \(a(y)\).
Now consider the setting where \(D = [0,1]\)
and \(a(x) \equiv 1 \), so that \eqref{eq:ellptic} reduces to the 1-dimensional Poisson equation 
with explicitly computable Green's function. Indeed,
\[G(x,y) = \frac{1}{2} \left ( x + y - |y-x| \right ) - xy. \]
Note that although the map \(f \mapsto u\) is, in
function space, linear,  the Green's function  itself is not linear in either argument. 
Figure \ref{fig:kernel1d} shows \(\kappa_\phi\) after training with $N=2048$ samples \(f_j \sim \mu = \mathcal{N}(0, (-\Delta + I)^{-1})\) with 
periodic boundary conditions on the operator $-\Delta+I$.
(samples from this measure
can be easily implemented by means of a random Fourier series -- Karhunen-Loeve -- see
\cite{Lord}). 

\begin{figure}[t]
    \centering
    \includegraphics[width=6cm]{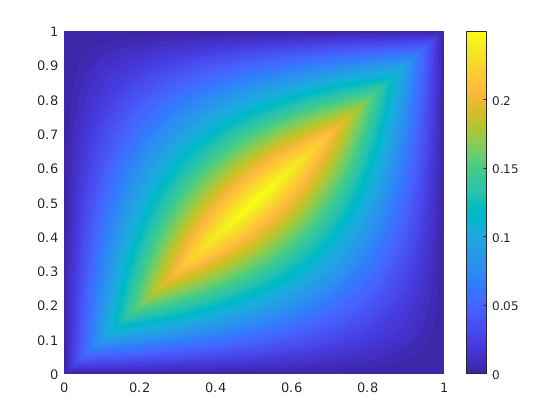}
    \includegraphics[width=6cm]{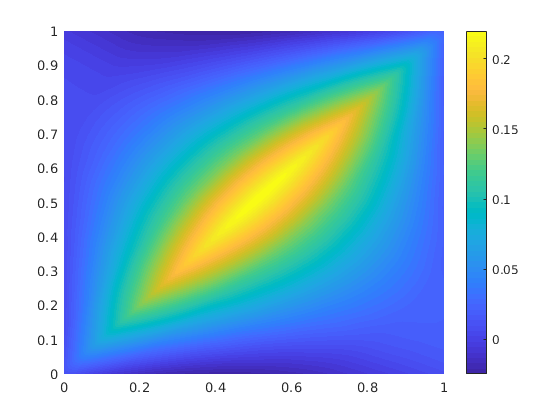}\\
\small{Proof of concept: graph kernel network on $1$ dimensional Poisson equation; comparison
of learned and truth kernel.}
\caption{Kernel for one-dimensional Green's function}
\label{fig:kernel1d}
\end{figure}

\paragraph{Example 2: 2D Poisson equation.} We further demonstrate the power of the graph kernel network by extending the Poisson equation studied in example 1 to the two dimensional (2D) case, where we approximate the map \(f(x) \mapsto u(x)\) where \(x \in D = [0,1] \times [0,1] \). We consider two approaches: $(i)$ graph kernel network to approximate the 2D Green's function \(G(x,y)\) and $(ii)$ dense neural network with \(f(x)\) as input and \(u(x)\) as output such that the mapping \(f(x) \mapsto u(x)\) is directly approximated.

The two neural networks are trained with the same training sets of different sizes ranging from 1 to 100 samples and tested with 1000 test samples. The relative \(l_2\) test errors as a function of training samples are illustrated in Figure \ref{fig:compareGKN}.
We observe that the Graph kernel network approximates the map with a minimum \( (\approx 5)\)\ 
number of training samples while still possessing a smaller test error comparing to that of a dense neural network trained with 100 samples. Therefore, the Graph kernel network potentially significantly reduces the number of required training samples to approximate the mapping. This property is especially important in practice as obtaining a huge number of training data for certain engineering/physics problems is always prohibitive. 

The reason for the Graph kernel network to have such strong approximation power is because that it is able to learn the truth Green's function for the Poisson equation, as already demonstrated in the 1D case. We shall demonstrate further in Appendix A.4 that the Graph kernel networks are apt to capture the truth kernel for the 2D Poisson equation.  

\begin{figure}[t]
    \centering
    \includegraphics[width=12cm]{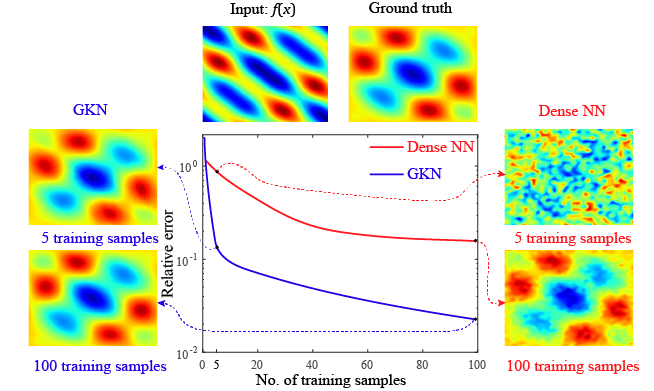}\\
    \label{fig:compareGKN}
    \caption{Comparison between the performance of graph kernel network and the dense neural network on the approximation of \eqref{eq:ellptic}. Plots of relative \(l_2\) test approximation errors versus the number of training samples for the Graph kernel network and dense neural networks when approximating the problem \eqref{eq:ellptic}. 
    }
\end{figure}

\paragraph{Algorithmic framework.} 
The initialization \(v_0(x)\) to our network \eqref{eq:kernel} can be viewed as the initial guess we make for the solution \(u(x)\) as well as any other dependence we want to make explicit. A natural choice is to start with the coefficient \(a(x)\) itself as well as the position in physical space \(x\). This $(d+1)$-dimensional vector field is then lifted to a $n$-dimensional vector field, an operation which we may view
as the first layer of the overarching neural network. This is then used as an initialization to the kernel neural network, which is iterated $T$ times. In the final layer,
we project back to the scalar field of interest with another neural network layer.

Due to the smoothing effect of the inverse elliptic operator in \eqref{eq:ellptic} with respect
to the input data $a$ (and indeed $f$ when we consider this as input), we augment the initialization $(x, a(x))$ with a Gaussian smoothed version of the coefficients  $a_{\epsilon}(x)$, together with 
their gradient $\nabla a_{\epsilon}(x)$. Thus we initialize with a $2(d+1)$-dimensional vector field. Throughout this paper the Gaussian
smoothing is performed with a centred isotropic Gaussian with variance $5.$
The Borel measure $\nu_x$ is chosen to be the Lebesgue measure
supported on a ball at $x$ of radius $r$. Thus we have
\begin{align}
&v_0(x) = P(x, a(x), a_{\epsilon}(x), \nabla a_{\epsilon}(x)) + p\\
\begin{split}\label{eq:int}
v_{t+1}(x) = \sigma\Big( W v_t(x)  
+ \int_{B(x,r)} \!\!\!\kappa_{\phi}\big(x,y,a(x),a(y)\big)
v_t(y)\: \mathrm{d}y \Big)
\end{split}\\
&u(x) = Q  v_T (x) + q
\end{align}
where $P \in \R^{n \times 2(d+1)}$, $p \in \R^{n}$, $v_{t}(x) \in \R^n$ and $Q \in \R^{1 \times n}$, $q \in \R$.
The integration in (\ref{eq:int}) is approximated by a Monte Carlo sum via a message passing graph network with edge weights $(x,y,a(x),a(y))$. The choice of measure \(\nu_x(\mathrm{d}y) = \mathds{1}_{B(x,r)} \mathrm{d}y\) is two-fold: 1) it allows for more efficient computation and 2) it exploits the decay property of the Green's function. Note that if more information is known about the true kernel, it can be added into this measure. For example, if we know the true kernel has a Gaussian 
structure, we can define \(\nu_x(\mathrm{d}y) = \mathds{1}_{B(x,r)} \rho_x(y) \mathrm{d}y\) where \(\rho_x(y)\) is a Gaussian density. Then \(\kappa_\phi\) will need to learn a much less complicated function. We however do not pursue this direction in the current line of work.

\paragraph{Message passing graph networks.}
Message passing graph networks comprise a standard architecture employing edge features~\cite{gilmer2017neural}. If we properly construct the graph on the spatial domain $D$ of the PDE, the kernel integration can be viewed as an aggregation of messages.
Given node features $v_t(x) \in \mathbb{R}^{n}$, edge features $e(x,y) \in \mathbb{R}^{n_e}$, and a graph $G$, the message passing neural network with averaging aggregation is
\begin{equation}\label{eq:mpnn}
v_{t+1}(x) =  \sigma\Bigl(W v_t(x) + 
\frac{1}{|N(x)|} \!\!\sum_{y \in N(x)} \!\!\!\kappa_{\phi}\big(e(x,y)\big) v_t(y)\Bigr)\!\!
\end{equation}
where $W \!\!\in\! \mathbb{R}^{n \times n}$, $N(x)$ is the neighborhood of $x$ according to the graph, $\kappa_{\phi}\big(e(x,y)\big)$ is a neural network taking as input edge features and as output 
a matrix in $\mathbb{R}^{n \times n}$. Relating to \eqref{eq:int}, $e(x,y) \!=\! (x,y,a(x),a(y)) \!\in \!\R^{2(d+1)}$.

\paragraph{Graph construction.} 
To use the message passing framework (\ref{eq:mpnn}), we need to design a graph that connects the physical domain \(D\) of the PDE. The nodes are chosen to be the \(K\) discretized spatial locations. Here we work on a standard uniform mesh, but there are many other possibilities such as finite-element triangulations and random points at which data is acquired. The edge connectivity is then chosen according to the integration measure in \eqref{eq:int}, namely Lebesgue restricted to a ball. Each node \(x \in \R^d\) is connected to nodes which lie within \(B(x,r)\), defining the neighborhood set \(N(x)\). Then for each neighbor \(y \in N(x)\),
we assign the edge weight \(e(x,y) = (x,y,a(x),a(y))\). Equation \eqref{eq:mpnn} can then be viewed as 
a Monte Carlo approximation of \eqref{eq:int}. This local structure allows for more efficient computation while remaining invariant to mesh-refinement. Indeed, since the radius parameter \(r\) is chosen in physical space, the size of the set \(N(x)\) grows as the discretization size \(K\) grows. This is a 
key feature that makes our methodology mesh-independent.

\paragraph{Nystr\"om approximation of the kernel.}
While the aforementioned graph structure severely reduces the computational overhead of integrating over the entire domain \(D\) (corresponding to a fully-connected graph), 
the number of edges still scales like \(\mathcal{O}(K^2)\). To overcome this, we employ a random Nystr\"om-type approximation of the kernel. In particular, we uniformly sample 
\(m \ll K\) nodes from the original graph, constructing a new random sub-graph. This process is repeated \(l \in \mathbb{N}\) times, yielding \(l\) random sub-graphs each with \(m\) nodes. This can be thought of as a way of reducing the variance in the estimator. We use these sub-graphs
when evaluating \eqref{eq:mpnn} during training, leading to the more favorable scaling \(\mathcal{O}(lm^2)\). Indeed, numerically we find that \(l=4\) and \(m=200\) is sufficient even when \(K = 421^2 = 177,241\). In the evaluation phase, when we want the solution on a particular mesh geometry, we simply partition the mesh into sub-graphs each with \(m\) nodes and evaluate each separately. 

We will now highlight  the quality of this kernel approximation in a RHKS setting. A real Reproducing Kernel Hilbert Space (RKHS)  $(\mathcal{H}, \langle \cdot, \cdot \rangle, \|\cdot\|)$ is a Hilbert space of functions $f:D\to \R$ where point-wise evaluation is a continuous linear functional, i.e. $|f(x)| \leq C \|f\|$ for some constant $C \geq 0$, independent of $x$. For every RHKS, there exists a unique, symmetric, positive definite kernel $\kappa: D \times D \to \R$, which gives the representation  $f(x) = \langle f, \kappa(\cdot,x) \rangle$.
Let $T : \mathcal{H} \to \mathcal{H}$ be a linear operator on \(\mathcal{H}\) acting via the kernel
\begin{equation*}
    Tf = \int_{B(\cdot,r)} \kappa(\cdot,y)f(y)\mathrm{\nu}(dy).
\end{equation*}
Let \(T_m : \mathcal{H} \to \mathcal{H}\) be its \(m\)-point empirical approximation
\begin{align*}
    &T_m = \int_{B(\cdot,r)} \kappa(\cdot,y)f(y)\mathrm{\nu}_m(dy),\\
    &\nu_m(dy)=\frac{1}{m}\sum_{k=1}^m \delta_{y_k}(dy),
\end{align*}
so that
\begin{equation*}
    T_m f = \frac{1}{m} \sum_{k=1}^m \kappa(\cdot,y_k)f(y_k).
\end{equation*}
The error of this approximation achieves the Monte Carlo rate $O(m^{-1/2})$:
\begin{proposition}
\label{prop:montecarlo}
Suppose $\E_{y\sim\nu}[\kappa(\cdot,y)^4] < \infty$ then there exists a constant $C \geq 0$
such that
\[ \E\|T - T_m\|_{HS} \leq \frac{C}{\sqrt{m}}\]
where \(\|\cdot\|_{HS}\) denotes the Hilbert-Schmidt norm on operators acting on \(\mathcal{H}\).
\end{proposition}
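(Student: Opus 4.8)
The plan is to recognize this as a standard Monte Carlo variance estimate, lifted to the Hilbert space of Hilbert-Schmidt operators on $\mathcal{H}$. The crucial first move is to use the reproducing property $f(y) = \langle f, \kappa(\cdot,y) \rangle$ to rewrite the empirical operator as an average of random rank-one tensor operators. Writing $g \otimes h$ for the operator $f \mapsto g \langle h, f \rangle$, I would set $\xi_k = \kappa(\cdot,y_k) \otimes \kappa(\cdot,y_k)$, so that $T_m = \frac{1}{m}\sum_{k=1}^m \xi_k$ while $T = \int \kappa(\cdot,y) \otimes \kappa(\cdot,y)\, \nu(dy) = \E_{y\sim\nu}[\xi_1]$. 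Since the sample points $y_1,\dots,y_m$ are i.i.d.\ draws from $\nu$, the $\xi_k$ are i.i.d.\ Hilbert-Schmidt-valued random elements with common mean $T$; in particular $T_m$ is an unbiased estimator of $T$.

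Second, I would record the single fact about rank-one operators that drives everything: by the reproducing property $\|\kappa(\cdot,y)\|^2 = \kappa(y,y)$, and the Hilbert-Schmidt norm of a rank-one tensor is the product of the factor norms, so $\|\xi_k\|_{HS} = \|\kappa(\cdot,y_k)\|^2 = \kappa(y_k,y_k)$ and hence $\|\xi_k\|_{HS}^2 = \kappa(y_k,y_k)^2$. The moment hypothesis $\E_{y\sim\nu}[\kappa(\cdot,y)^4] < \infty$ is precisely the statement that $\E[\|\xi_1\|_{HS}^2] = \E[\kappa(y,y)^2] =: C^2 < \infty$, which guarantees the $\xi_k$ are square-Bochner-integrable and licenses the computations below.

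Third, I would carry out the variance decomposition in the Hilbert space of Hilbert-Schmidt operators. Setting $Z_k = \frac{1}{m}(\xi_k - T)$, the $Z_k$ are i.i.d.\ and mean-zero, so expanding the square and using independence to kill the cross terms gives
\[ \E\|T - T_m\|_{HS}^2 = \frac{1}{m^2}\sum_{k=1}^m \E\|\xi_k - T\|_{HS}^2 = \frac{1}{m}\E\|\xi_1 - T\|_{HS}^2. \]
Then the bias-variance identity $\E\|\xi_1 - T\|_{HS}^2 = \E\|\xi_1\|_{HS}^2 - \|T\|_{HS}^2 \le \E\|\xi_1\|_{HS}^2 = C^2$ yields $\E\|T - T_m\|_{HS}^2 \le C^2/m$, and finally Jensen's inequality $\E\|T - T_m\|_{HS} \le (\E\|T - T_m\|_{HS}^2)^{1/2}$ converts the second-moment bound into the claimed $C/\sqrt{m}$.

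The steps are individually routine; the only points that require care are interpreting the moment condition correctly (reading $\kappa(\cdot,y)^4$ as $\|\kappa(\cdot,y)\|^4 = \kappa(y,y)^2$) and justifying that the $\xi_k$ are genuinely Bochner-integrable Hilbert-Schmidt-valued random elements, so that $\E[\xi_1] = T$ and the vanishing of cross terms are valid — both of which follow from the assumed finite fourth moment. I would also note in passing that the domain-of-integration indicator $\mathds{1}_{B(\cdot,r)}$ should be absorbed into the sampling measure $\nu$ (or into $\kappa$) so that the unbiasedness identity $\E[\xi_1] = T$ holds exactly; this is a bookkeeping matter rather than a genuine obstacle.
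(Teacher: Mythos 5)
Your proposal is correct and follows essentially the same route as the paper's proof: both rewrite $T$ and $T_m$ as the mean and empirical mean of i.i.d.\ rank-one Hilbert--Schmidt operators $\kappa(\cdot,y)\otimes\kappa(\cdot,y)$ via the reproducing property, compute the variance in the Hilbert--Schmidt norm (cross terms vanishing by independence), and finish with Jensen's inequality. The only cosmetic differences are that the paper carries the indicator $\mathds{1}_{B(\cdot,r)}$ explicitly through the computation rather than absorbing it into the measure, and it sets $C^2 = \E_{y\sim\nu}\|T_y - T\|_{HS}^2$ (the exact variance) where you use the slightly larger bound $\E\|\xi_1\|_{HS}^2$; neither affects the result.
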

For proof see Appendix \ref{app:proof}. With stricter assumptions similar results can also be proven with high probability \cite{Belkin}.

\begin{lemma}\label{lemma:Belkin}(cite Belkin)
$T$ and $T_K$ are Hilbert-Schmidt. Furthermore, with probability greater than $1-2e^{\tau}$
\[\|T - T_K\|_{HS} \leq \frac{2\sqrt{2}k\sqrt{\tau}}{\sqrt{K}}\]
where $k = \sup_{x\in D}{\kappa(x,x)}.$
\end{lemma}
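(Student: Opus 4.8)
The plan is to realize $T$ and $T_K$ as, respectively, the expectation and the empirical average of a single family of i.i.d.\ rank-one Hilbert--Schmidt operators, and then to upgrade the in-expectation bound of Proposition~\ref{prop:montecarlo} to a high-probability bound by invoking a Hoeffding-type concentration inequality for Hilbert-space-valued random variables. First I would use the reproducing property $f(y) = \langle f, \kappa(\cdot,y)\rangle$ to put the integral operator in tensor form. Writing $\kappa_y := \kappa(\cdot,y) \in \mathcal{H}$ and letting $\kappa_y \otimes \kappa_y$ denote the rank-one operator $g \mapsto \langle g, \kappa_y\rangle \kappa_y$, one sets $L_y := \kappa_y \otimes \kappa_y$ and obtains
\[
T = \E_{y\sim\nu}[L_y], \qquad T_K = \frac{1}{K}\sum_{k=1}^K L_{y_k}.
\]
Since $\|\kappa_y\otimes\kappa_y\|_{HS} = \|\kappa_y\|^2 = \kappa(y,y) \le k$, each $L_y$ is Hilbert--Schmidt with $\|L_y\|_{HS}\le k$; hence $T$, as a Bochner integral in the Hilbert space of Hilbert--Schmidt operators with uniformly bounded integrand, is well-defined and Hilbert--Schmidt with $\|T\|_{HS}\le k$, and $T_K$, a finite average of such operators, is Hilbert--Schmidt as well. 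This establishes the first assertion.

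For the quantitative bound I would write the difference as an i.i.d.\ average of centered summands,
\[
T - T_K = \frac{1}{K}\sum_{k=1}^K \xi_k, \qquad \xi_k := T - L_{y_k},
\]
viewed as random elements of the Hilbert space of Hilbert--Schmidt operators. Because $\E[L_{y_k}] = T$, the $\xi_k$ are i.i.d.\ and mean zero, and the triangle inequality gives $\|\xi_k\|_{HS} \le \|T\|_{HS} + \|L_{y_k}\|_{HS} \le 2k$ almost surely. I would then apply the Hilbert-space Hoeffding inequality (in the form used by \cite{Belkin}, going back to Pinelis): for i.i.d.\ mean-zero $\xi_k$ with $\|\xi_k\| \le M$ a.s.,
\[
\mathbb{P}\Big( \big\| \tfrac{1}{K}\sum_{k=1}^K \xi_k \big\|_{HS} \ge t \Big) \le 2\exp\Big(-\frac{K t^2}{2M^2}\Big).
\]
Taking $M = 2k$ and equating the right-hand side to $2e^{-\tau}$ yields $t = 2\sqrt{2}\,k\sqrt{\tau}/\sqrt{K}$, which is exactly the claimed bound; note this makes the tail probability $2e^{-\tau}$, i.e.\ the event holds with probability at least $1-2e^{-\tau}$.

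The routine parts are the tensor representation, the norm identity $\|\kappa_y\otimes\kappa_y\|_{HS}=\kappa(y,y)$, and the almost-sure bound $\|\xi_k\|_{HS}\le 2k$. The main obstacle is the concentration step itself: unlike the scalar Hoeffding inequality, obtaining a \emph{dimension-free} subgaussian tail for the norm of a sum of bounded Hilbert-space-valued variables requires a genuinely vector-valued argument (Pinelis's smoothness/martingale approach), and one must track the constant carefully to land on the precise factor $2\sqrt{2}\,k$. A secondary subtlety is the ball restriction $B(\cdot,r)$ in the definition of $T$: the truncated section $\mathds{1}_{B(\cdot,r)}\kappa_y$ need not lie in $\mathcal{H}$, so to preserve the clean outer-product form $L_y=\kappa_y\otimes\kappa_y$ one should absorb the restriction into the sampling measure $\nu$ (taken supported in the relevant balls) or into the kernel itself, consistent with the way the empirical operator $T_K$ is written without an explicit indicator.
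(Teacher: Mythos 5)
Your proposal is correct and is essentially the intended argument: the paper offers no proof of this lemma at all (it simply defers to the Belkin reference), and your reconstruction — the rank-one tensor decomposition $T=\E_{y\sim\nu}[\kappa_y\otimes\kappa_y]$, $\|\kappa_y\otimes\kappa_y\|_{HS}=\kappa(y,y)\le k$, followed by the Pinelis/Hoeffding inequality in the Hilbert space of Hilbert--Schmidt operators with $M=2k$ — is exactly the standard proof behind the cited result and reproduces the constant $2\sqrt{2}\,k$ precisely, matching the same tensorization the paper uses in its proof of Proposition~\ref{prop:montecarlo}. You also correctly identify the two glitches in the statement (the exponent should read $1-2e^{-\tau}$, and the indicator $\mathds{1}_{B(\cdot,r)}$ must be absorbed into the measure or kernel for the outer-product form to be literally valid), both of which the paper itself glosses over.
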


We note that, in our algorithm, $\kappa: D \times D \to \R^{n \times n }$ whereas the preceding results are proven only in the setting $n=1$; nonetheless they provide useful intuition regarding the approximations used in our methodology.

\section{EXPERIMENTS}
\label{sec:numerics}

In this section we illustrate the claimed properties of our methodology, 
and compare it to existing approaches in the literature. All experimental results 
concern the mapping \(a \mapsto u\) defined by \eqref{eq:ellptic} with \(D = [0,1]^2\). Coefficients 
are generated according to \(a \sim \mu \) where \(\mu = \psi_{\#} \mathcal{N}(0,(-\Delta + 9I)^{-2})\)
with a Neumann boundry condition on the operator \(-\Delta + 9I\).
The mapping \(\psi : \R \to \R\) takes the value 12 on the positive part of the real line and 3 on the negative. Hence the coefficients are piecewise constant with random geometry and a 
fixed contrast of 4. Such constructions are prototypical of physical properties such as
permeability in sub-surface flows and material microstructures in elasticity. Solutions \(u\)
are obtained by using a second-order finite difference scheme on a $241 \times 241$ grid.
Different resolutions are downsampled from this dataset.

To be concrete we set the dimension of representation $n$ (i.e. the width of graph network) to be $64$, the number of iterations $T$ to be $6$, \(\sigma\) to be the ReLU function, and the inner kernel network $\kappa$ to be a $3-$layer feed-forward network with widths $(6, 512, 1024, n^2)$ and ReLU activation. We use the Adam optimizer with a learning rate $1e-4$ and train for $200$ epochs with respect to the absolute mean squared error on the normalized data unless otherwise stated. These chosen hyperparameters are not optimized and could be adapted to improve performance. 
We employ the message passing network from the standard Pytorch graph network library Torch-geometric \cite{Fey/Lenssen/2019}. All the test errors are relative $L^2(D)$ errors on the original data. A {\bf table of notations} for the hyper-parameters can be found in Appendix \ref{table:notation}. The {\bf code} and {\bf data} can be found at \url{https://github.com/wumming/graph-pde}.

\subsection{Supervised Setting}
First we consider the supervised scenario that we are given $N$ training pairs $\{a_j, u_j\}_1^N$, where each $a_j$ and $u_j$ are provided on a $s \times s$ grid (\(K = s^2\)).

\paragraph{Generalization of resolutions on full grids.}
To examine the generalization property, we train the graph kernel network on resolution $s \times s $ and test on another resolution $s'\times s'$. We fix the radius to be $r=0.10$, train on $N=100$ equation pairs and test on $40$ equation pairs.

\begin{table}[h]
\caption{Comparing resolutions on full grids}
\label{table:1}
\begin{center}
\begin{tabular}{l|lll}
\multicolumn{1}{c}{\bf Resolutions}  &\multicolumn{1}{c}{\bf $s'=16$}
&\multicolumn{1}{c}{\bf $s'=31$}
&\multicolumn{1}{c}{\bf $s'=61$} \\
\hline 
$s=16$  &$0.0525$ &$0.0591$ &$0.0585$\\
$s=31$  &$0.0787$  &$0.0538$ &$0.0588$ \\
\hline
\end{tabular}\\
\small{$r=0.10$, $N=100$, relative $l_2$ test error}
\end{center}

\end{table}

As shown in Table \ref{table:1}, for each row, the test errors at different resolutions remain on the same scale, illustrating the desired design feature that graph kernel networks can train on one resolution and generalize to another resolution. The test errors on the diagonal ($s=s'=16$ and $s=s'=31$) are the smallest, which means the network has the best performance when the training grid and the test grid are the same. Interestingly, for the second row, when training on $s=31$, it is easier to general to $s'=61$ than to $s'=16$. This is because when generalizing to a larger grid, the support of the kernel becomes large which does not hurt the performance. But when generalizing to a smaller grid, part of the support of the kernel is lost, which causes the kernel to be inaccurate.

\paragraph{Expressiveness and overfitting.}
We compare the training error and test error with a different number of training pairs $N$ to see if the kernel network can learn the kernel structure even with a small amount of data. We study the expressiveness of the kernel network, examining how it overfits. We fix $r=0.10$ on the $s=s'=31$ grid and train  with $N = 10,100,1000$ whilst employing $2000$, $500$, $100$ epochs respectively. 

\begin{table}[h]

\caption{Comparing number of training pairs}
\label{table:2}
\begin{center}
\begin{tabular}{l|ll}
\multicolumn{1}{c}{\bf Training Size}  &\multicolumn{1}{c}{\bf Training Error} &\multicolumn{1}{c}{\bf Test Error} \\
\hline 
$N=10$          &$0.0089$  &$0.0931$\\
$N=100$          &$0.0183$  &$0.0478$ \\
$N=1000$          &$0.0255$  &$0.0345$ \\
\hline
\end{tabular}\\
$2000$, $500$, $100$ epochs respectively.
\end{center}

\end{table}
We see from Table \ref{table:2} that the kernel network already achieves a reasonable result when $N=10$, and the accuracy is competitive when $N=100$.
In all three cases, the test error is larger than the training error suggesting that the kernel network has enough expressiveness to overfit the training set. This overfitting is not severe as the training error will not be pushed to zero even for $N=10$, after $2000$ epochs. 

\subsection{Semi-Supervised Setting}
In the semi-supervised setting, we are only given $m$ nodes sampled from a $s\times s$ grid for each training pair, and want to evaluate on $m'$ nodes sampled from an $s'\times s'$ grid for each test pair. To be concrete, we set the number of sampled nodes $m=m'=200$. For each training pair, we sample twice $l=2$; for each test pair, we sample once $l'=1$. We train on $N=100$ pairs and test on $N'=100$ pairs. The radius for both training and testing is set to $r=r'=0.25$.

\paragraph{Generalization of resolutions on sampled grids.}
Similar to the first experiments, we train the graph kernel network with nodes sampled from the $s \times s $ resolution and test on nodes sampled from the $s'\times s'$ resolution. 
\begin{table}[h]

\caption{Generalization of resolutions on sampled Grids}
\label{table:3}
\begin{center}
\begin{tabular}{l|lll}
\multicolumn{1}{c}{\bf Resolutions}  &\multicolumn{1}{c}{\bf $s'=61$}
&\multicolumn{1}{c}{\bf $s'=121$}
&\multicolumn{1}{c}{\bf $s'=241$} \\
\hline 
$s=16$         &$0.0717$  &$0.0768$ &$0.0724$\\
$s=31$         &$0.0726$  &$0.0710$ &$0.0722$ \\
$s=61$         &$0.0687$  &$0.0728$ &$0.0723$ \\
$s=121$        &$0.0687$  &$0.0664$ &$0.0685$ \\
$s=241$        &$0.0649$  &$0.0658$ &$0.0649$ \\
\hline
\end{tabular}\\
\small{$N=100$, $m=m'=200$, $r=r'=0.25$, $l=2$}
\end{center}

\end{table}
As shown in Table \ref{table:3}, for each row, the test errors on different resolutions are about the same, which means the graph kernel network can also generalize in the semi-supervised setting. Comparing the rows, large training resolutions $s$ tend to have a smaller error. When sampled from a finer grid, there are more edges because the support of the kernel is larger on the finer grid. Still, the performance is best when $s=s'$.

\paragraph{The number of examples vs the times of sampling.}
Increasing the number of times we sample, $l$, reduces the error from the Nystr\"om approximation. By comparing different $l$ we determine which value will be sufficient.
When we sample $l$ times for each equation, we get a total of $N l$ sampled training pairs, Table~\ref{table:4}.
%

\begin{table}[h]

\caption{
Number of training pairs and sampling
}
\label{table:4}
\begin{center}
\begin{tabular}{l|llll}
\multicolumn{1}{c}{\bf } 
&\multicolumn{1}{c}{\bf $l=1$}
&\multicolumn{1}{c}{\bf $l=2$} 
&\multicolumn{1}{c}{\bf $l=4$}
&\multicolumn{1}{c}{\bf $l=8$}\\
\hline 
$N=10$ &$0.1259$ &$0.1069$ & $0.0967$ &$0.1026$\\
$N=100$ &$0.0786$ &$0.0687$ & $0.0690$ &$0.0621$\\
$N=1000$ &$0.0604$ &$0.0579$ & $0.0540$ &$0.0483$ \\
\hline
\end{tabular}\\
\small{$s=241$, $m=m'=200$, $r=r'=0.25$}
\end{center}

\end{table}
Table \ref{table:4} indicates that the larger $l$ the better, but $l=2$ already gives a reasonable performance. Moreover, the same order of sampled training pairs, $(N=100, l=8)$, and $(N=1000, l=1)$, result in a similar performance. It implies that in a low training data regime, increasing $l$ improves the performance.

\paragraph{Different number of nodes in training and testing.}
To further examine the Nystr\"om approximation, we compare different numbers of node samples $m, m'$ for both training and testing.
\begin{table}[h]

\caption{Number of nodes in the training and testing}
\label{table:5}
\begin{center}
\begin{tabular}{l|llll}
\multicolumn{1}{c}{\bf } 
&\multicolumn{1}{c}{\bf $m'=100$} 
&\multicolumn{1}{c}{\bf $m'=200$} 
&\multicolumn{1}{c}{\bf $m'=400$}
&\multicolumn{1}{c}{\bf $m'=800$}\\
\hline 
$m=100$ &$0.0871$ &$0.0716$ &$0.0662$ &$0.0609$ \\
$m=200$ &$0.0972$ &$0.0734$ &$0.0606$ &$0.0562$ \\
$m=400$ &$0.0991$ &$0.0699$ &$0.0560$ &$0.0506$\\
$m=800$ &$0.1084$ &$0.0751$ &$0.0573$ &$0.0478$ \\
\hline
\end{tabular}\\
\small{$s=121, r=r'=0.15, l=5$}
\end{center}

\end{table}

As can be seen from Table \ref{table:5}, in general the larger $m$ and $m'$ the better. For each row, fixing $m$, the larger $m'$ the better. But for each column, when fixing $m'$, increasing $m$ may not lead to better performance. This is again due to the fact that when learning on a larger grid, the kernel network learns a kernel with larger support. When evaluating on a smaller grid, the learned kernel will be truncated to have small support, leading to an increased error.
In general, $m=m'$ will be the best choice.   

\paragraph{The radius and the number of nodes.}
The computation and storage of graph networks directly scale with the number of edges. In this experiment, we want to study the trade-off between the number of nodes $m$ and the radius $r$ when fixing the number of edges.

\begin{table}[h]

\caption{The radius and the number of nodes }
\label{table:6}
\begin{center}
\begin{tabular}{l|lll}
\multicolumn{1}{c}{ }
&\multicolumn{1}{c}{\bf $m=100$}
&\multicolumn{1}{c}{\bf $m=200$}
&\multicolumn{1}{c}{\bf $m=400$}\\
\hline 
$r=0.05$  &$0.110(176)$ &$0.109(666)$ &$0.099(3354)$\\
$r=0.15$  &$0.086(512)$ &$0.070(2770)$ &$0.053(14086)$\\
$r=0.40$   &$0.064(1596)$ &$0.051(9728)$  &$0.040(55919)$ \\
$r=1.00$  &$0.059(9756)$ &$0.048(38690)$ &$-$\\
\hline
\end{tabular}\\
\small{Error (Edges), $s=121$, $l=5$, $m'=m$}
\end{center}

\end{table}

Table \ref{table:6} shows the test error with the number of edges for different $r$ and $m$. In general, the more edges, the better. For a fixed number of edges, the performance depends more on the radius $r$ than on the number of nodes $m$. In other words, the error of truncating the kernel locally is larger than the error from the Nystr\"om approximation. It would be better to use larger $r$ with smaller $m$.

\paragraph{Inner Kernel Network $\kappa$.}
To find the best network structure of $\kappa$, we compare different combinations of width and depth. We consider three cases of $\kappa$: 1. a $2-$layer feed-forward network with widths $(6, width, n^2)$, 2. a $3-$layer feed-forward network with widths $(6, width/2, width, n^2)$, and 3. a $5-$layer feed-forward network with widths $(6, width/4, width/2, width, width, n^2)$, all with ReLU activation and learning rate $1e-4$.

\begin{table}[h]
\caption{Comparing the width and depth for the inner kernel network $\kappa$}
\label{table:8}
\begin{center}
\begin{tabular}{l|lll}
\multicolumn{1}{c}{ } 
&\multicolumn{1}{c}{  depth $ = 2$}
&\multicolumn{1}{c}{  depth $ = 3$} 
&\multicolumn{1}{c}{  depth $ = 5$}\\
\hline 
width = $64$ & $0.0685$ & $0.0695$ & $0.0770$\\
width = $128$ & $0.0630$ & $0.0633$ & $0.0702$\\
width = $256$ & $0.0617$ & $0.0610$ & $0.0688$\\
width = $1024$ & $0.0641$ & $0.0591$ & $0.0608$\\
width = $4096$ & $0.2934$ & $0.0690$ & $0.0638$\\
\hline
\end{tabular}\\
\small{$s=241$, $m=200$, $r=0.25$}
\end{center}
\end{table}

As shown in Table \ref{table:8}, have wider and deeper network increase the expensiveness of the kernel network. The diagonal combinations $width=256,depth=2$, $width=1024,depth=3$, and $width=4096,depth=5$ have better test error. Notice the wide but shallow network $width=4096,depth=2$ has very bad performance. Both its training and testing error once decreased to $0.09$ around $10$th epochs, but then blow off. The $1e-4$ learning rate is probably too high for this combination. In general, depth of $3$ with with of $256$ is a good combination of Dracy Equation dataset.

\subsection{Comparison with Different Benchmarks}
In the following section, we compare the Graph Kernel Network with different benchmarks on a larger dataset of $N=1024$ training pairs computed on a $421 \times 421$ grid. The network is trained and evaluated on the same full grid. The results are presented in Table \ref{table:7}. \begin{itemize}
    \item {\bf NN} is a simple point-wise feedforward neural network. It is mesh-free, but performs badly due to lack of neighbor information.
    \item {\bf FCN} is the state of the art neural network method based on Fully Convolution Network \cite{Zabaras}. It has a dominating performance for small grids $s=61$. But fully convolution networks are mesh-dependent and therefore their error grows when moving to a larger grid.
    \item {\bf PCA+NN} is an instantiation of the methodology proposed in \cite{Kovachki}: using PCA as an autoencoder on both the input and output data and interpolating the latent spaces with a neural network. The method provably obtains mesh-independent error and can learn purely from data, however the solution can only be evaluated on the same mesh as the training data.  
    \item {\bf RBM} is the classical Reduced Basis Method (using a PCA basis), which is widely used in applications and provably obtains mesh-independent error \cite{DeVoreReducedBasis}. It has the best performance but the solutions can only be evaluated on the same mesh as the training data and one needs knowledge of the PDE to employ it.
    \item {\bf GKN} stands for our graph kernel network with $r=0.25$ and $m=300$. It enjoys competitive performance against all other methods while being able to generalize to different mesh geometries. Some figures arising from GKN are included in Appendix \ref{app:figures}.
\end{itemize}

\begin{table}[h]

\caption{Error of different methods}
\label{table:7}
\begin{center}
\begin{tabular}{l|llll}
\multicolumn{1}{c}{\bf Networks} 
&\multicolumn{1}{c}{\bf $s=85$}
&\multicolumn{1}{c}{\bf $s=141$} 
&\multicolumn{1}{c}{\bf $s=211$}
&\multicolumn{1}{c}{\bf $s=421$}\\
\hline 
NN       &$0.1716$  &$0.1716$  &$0.1716$ &$0.1716$\\
FCN       &$0.0253$  &$0.0493$  &$0.0727$ & $0.1097$\\
PCA+NN      &$0.0299$  &$0.0298$  &$0.0298$ & $0.0299$\\
RBM    &$0.0244$ &$0.0251$ &$0.0255$ &$0.0259$ \\
GKN     &$0.0346$   &$0.0332$  &$0.0342$ &$0.0369$\\
\hline 
\end{tabular}
\end{center}
\end{table}

\section{DISCUSSION AND FUTURE WORK}
We have introduced the concept of Neural Operator and instantiated it through \emph{graph kernel networks} designed to approximate
mappings between function spaces. They are constructed to be mesh-free and our numerical
experiments demonstrate that they have the desired property of being able to train and
generalize on different meshes. This is because the networks learn the mapping between infinite-dimensional function spaces, which can then be shared with approximations at
various levels of discretization. A further advantage is that data may be incorporated
on unstructured grids, using the Nystr\"om approximation.
We demonstrate that our method can achieve competitive performance with other mesh-free approaches
developed in the numerical analysis community, and beats state-of-the-art neural network approaches on large grids,
which are mesh-dependent. The methods developed in the numerical analysis community are less flexible
than the approach we introduce here, relying heavily on the variational structure of
divergence form elliptic PDEs.
Our new mesh-free method has many applications.
It has the potential to be a faster solver that learns from only sparse observations in physical space.
It is the only method that can work in the semi-supervised scenario when we only have measurements on some parts of the grid.
It is also the only method that can transfer between different geometries. For example, when computing the flow dynamic of many different airfoils, we can construct different graphs and train together.
When learning from irregular grids and querying new locations, our method does not require any interpolation, avoid subsequently interpolation error.

\paragraph{Disadvantage.}
Graph kernel network's runtime and storage scale with the number of edges $E=O(K^2)$. While other mesh-dependent methods such as PCA+NN and RBM require only $O(K)$. This is somewhat inevitable, because to learn the continuous function or the kernel, we need to capture pairwise information between every two nodes, which is $O(K^2)$; when the discretization is fixed, one just needs to capture the point-wise information, which is $O(K)$.
Therefore training and evaluating the whole grid is costly when the grid is large. On the other hand, subsampling to ameliorate cost loses information in the data, and causes errors which make our method less competitive than PCA+NN and RBM. 

\paragraph{Future works.}
To deal with the above problem, we propose that ideas such as multi-grid and fast multipole methods
\cite{gholami2016fft} may be combined with our approach to reduce complexity. In particular, a multi-grid approach will construct multi-graphs corresponding to different resolutions so that, within each graph, nodes only connect to their nearest neighbors. The number of edges then scale as $O(K)$ instead of $O(K^2)$. The error terms from Nystr\"om approximation and local truncation can be avoided.
Another direction is to extend the framework for time-dependent PDEs. Since the graph kernel network is itself an iterative solver with the time step $t$, it is natural to frame it as an RNN that each time step corresponds to a time step of the PDEs.

\subsection*{Acknowledgements}
Z. Li gratefully acknowledges the financial support from the Kortschak Scholars Program.
K. Azizzadenesheli is supported in part by Raytheon and Amazon Web Service. A. Anandkumar is supported in part by Bren endowed chair, DARPA PAIHR00111890035, LwLL grants, Raytheon, Microsoft, Google, Adobe faculty fellowships, and DE Logi grant. K. Bhattacharya, N. B. Kovachki, B. Liu and A. M. Stuart gratefully acknowledge the financial support of the Amy research Laboratory through the Cooperative Agreement Number W911NF-12-0022. Research was sponsored by the Army Research Laboratory and was accomplished under Cooperative Agreement Number W911NF-12-2-0022. 
The views and conclusions contained in this document are those of the authors and should not be interpreted as representing the official policies, either expressed or implied, of the Army Research Laboratory or the U.S. Government. The U.S. Government is authorized to reproduce and distribute reprints for Government purposes notwithstanding any copyright notation herein.

\newpage

\bibliographystyle{apalike}
\bibliography{ref.bib}

\begin{thebibliography}{}

\bibitem[Adler and Oktem, 2017]{Adler2017}
Adler, J. and Oktem, O. (2017).
\newblock Solving ill-posed inverse problems using iterative deep neural
  networks.
\newblock {\em Inverse Problems}.

\bibitem[Alet et~al., 2019]{pmlr-v97-alet19a}
Alet, F., Jeewajee, A.~K., Villalonga, M.~B., Rodriguez, A., Lozano-Perez, T.,
  and Kaelbling, L. (2019).
\newblock Graph element networks: adaptive, structured computation and memory.
\newblock In {\em 36th International Conference on Machine Learning}. PMLR.

\bibitem[Antman, 2005]{antman2005problems}
Antman, S.~S. (2005).
\newblock {\em Problems In Nonlinear Elasticity}.
\newblock Springer.

\bibitem[Bar and Sochen, 2019]{bar2019unsupervised}
Bar, L. and Sochen, N. (2019).
\newblock Unsupervised deep learning algorithm for pde-based forward and
  inverse problems.
\newblock {\em arXiv preprint arXiv:1904.05417}.

\bibitem[Battaglia et~al., 2018]{battaglia2018relational}
Battaglia, P.~W., Hamrick, J.~B., Bapst, V., Sanchez-Gonzalez, A., Zambaldi,
  V., Malinowski, M., Tacchetti, A., Raposo, D., Santoro, A., Faulkner, R.,
  et~al. (2018).
\newblock Relational inductive biases, deep learning, and graph networks.
\newblock {\em arXiv preprint arXiv:1806.01261}.

\bibitem[Bear and Corapcioglu, 2012]{bear2012fundamentals}
Bear, J. and Corapcioglu, M.~Y. (2012).
\newblock {\em Fundamentals of transport phenomena in porous media}.
\newblock Springer Science \& Business Media.

\bibitem[Belongie et~al., 2002]{belongie2002spectral}
Belongie, S., Fowlkes, C., Chung, F., and Malik, J. (2002).
\newblock Spectral partitioning with indefinite kernels using the nystr{\"o}m
  extension.
\newblock In {\em European conference on computer vision}. Springer.

\bibitem[Bhatnagar et~al., 2019]{bhatnagar2019prediction}
Bhatnagar, S., Afshar, Y., Pan, S., Duraisamy, K., and Kaushik, S. (2019).
\newblock Prediction of aerodynamic flow fields using convolutional neural
  networks.
\newblock {\em Computational Mechanics}, pages 1--21.

\bibitem[{Bhattacharya} et~al., 2020]{Kovachki}
{Bhattacharya}, K., {Kovachki}, N.~B., and {Stuart}, A.~M. (2020).
\newblock Model reduction and neural networks for parametric pde(s).
\newblock {\em preprint}.

\bibitem[Chen et~al., 2019]{chen2019graph}
Chen, C., Ye, W., Zuo, Y., Zheng, C., and Ong, S.~P. (2019).
\newblock Graph networks as a universal machine learning framework for
  molecules and crystals.
\newblock {\em Chemistry of Materials}, 31(9):3564--3572.

\bibitem[Cohen and DeVore, 2015]{cohendevore}
Cohen, A. and DeVore, R. (2015).
\newblock Approximation of high-dimensional parametric pdes.
\newblock {\em Acta Numerica}.

\bibitem[Damianou and Lawrence, 2013]{damianou2013deep}
Damianou, A. and Lawrence, N. (2013).
\newblock Deep gaussian processes.
\newblock In {\em Artificial Intelligence and Statistics}, pages 207--215.

\bibitem[DeVore, 2014]{DeVoreReducedBasis}
DeVore, R.~A. (2014).
\newblock {\em Chapter 3: The Theoretical Foundation of Reduced Basis Methods}.

\bibitem[Dunlop et~al., 2018]{aretha}
Dunlop, M.~M., Girolami, M.~A., Stuart, A.~M., and Teckentrup, A.~L. (2018).
\newblock How deep are deep gaussian processes?
\newblock {\em The Journal of Machine Learning Research}, 19(1):2100--2145.

\bibitem[E and Yu, 2018]{Weinan}
E, W. and Yu, B. (2018).
\newblock The deep ritz method: A deep learning-based numerical algorithm for
  solving variational problems.
\newblock {\em Communications in Mathematics and Statistics}.

\bibitem[Evans, 2010]{Evans}
Evans, L.~C. (2010).
\newblock {\em Partial Differential Equations}, volume~19.
\newblock American Mathematical Soc.

\bibitem[Fey and Lenssen, 2019]{Fey/Lenssen/2019}
Fey, M. and Lenssen, J.~E. (2019).
\newblock Fast graph representation learning with {PyTorch Geometric}.
\newblock In {\em ICLR Workshop on Representation Learning}.

\bibitem[{Garriga-Alonso} et~al., 2018]{Garriga-AlonsoGP}
{Garriga-Alonso}, A., {Rasmussen}, C.~E., and {Aitchison}, L. (2018).
\newblock {Deep Convolutional Networks as shallow Gaussian Processes}.
\newblock {\em arXiv e-prints}, page arXiv:1808.05587.

\bibitem[Gholami et~al., 2016]{gholami2016fft}
Gholami, A., Malhotra, D., Sundar, H., and Biros, G. (2016).
\newblock Fft, fmm, or multigrid? a comparative study of state-of-the-art
  poisson solvers for uniform and nonuniform grids in the unit cube.
\newblock {\em SIAM Journal on Scientific Computing}.

\bibitem[Gilbarg and Trudinger, 2015]{gilbarg2015elliptic}
Gilbarg, D. and Trudinger, N.~S. (2015).
\newblock {\em Elliptic partial differential equations of second order}.
\newblock springer.

\bibitem[Gilmer et~al., 2017]{gilmer2017neural}
Gilmer, J., Schoenholz, S.~S., Riley, P.~F., Vinyals, O., and Dahl, G.~E.
  (2017).
\newblock Neural message passing for quantum chemistry.
\newblock In {\em Proceedings of the 34th International Conference on Machine
  Learning}.

\bibitem[Globerson and Livni, 2016]{GlobersonLivni}
Globerson, A. and Livni, R. (2016).
\newblock Learning infinite-layer networks: Beyond the kernel trick.
\newblock {\em CoRR}, abs/1606.05316.

\bibitem[Guo et~al., 2016]{guo2016convolutional}
Guo, X., Li, W., and Iorio, F. (2016).
\newblock Convolutional neural networks for steady flow approximation.
\newblock In {\em Proceedings of the 22nd ACM SIGKDD International Conference
  on Knowledge Discovery and Data Mining}.

\bibitem[Gurtin, 1982]{gurtin1982introduction}
Gurtin, M.~E. (1982).
\newblock {\em An introduction to continuum mechanics}.
\newblock Academic press.

\bibitem[{Guss}, 2016]{Guss}
{Guss}, W.~H. (2016).
\newblock {Deep Function Machines: Generalized Neural Networks for Topological
  Layer Expression}.
\newblock {\em arXiv e-prints}, page arXiv:1612.04799.

\bibitem[Hamilton et~al., 2017]{hamilton2017inductive}
Hamilton, W., Ying, Z., and Leskovec, J. (2017).
\newblock Inductive representation learning on large graphs.
\newblock In {\em Advances in neural information processing systems}, pages
  1024--1034.

\bibitem[Herrmann et~al., 2020]{herrmann2020deep}
Herrmann, L., Schwab, C., and Zech, J. (2020).
\newblock Deep relu neural network expression rates for data-to-qoi maps in
  bayesian pde inversion.

\bibitem[Johnson, 2012]{johnson2012numerical}
Johnson, C. (2012).
\newblock {\em Numerical solution of partial differential equations by the
  finite element method}.
\newblock Courier Corporation.

\bibitem[Kipf and Welling, 2016]{kipf2016semi}
Kipf, T.~N. and Welling, M. (2016).
\newblock Semi-supervised classification with graph convolutional networks.
\newblock {\em arXiv preprint arXiv:1609.02907}.

\bibitem[Lord et~al., 2014]{Lord}
Lord, G.~J., Powell, C.~E., and Shardlow, T. (2014).
\newblock {\em An introduction to computational stochastic PDEs}, volume~50.
\newblock Cambridge University Press.

\bibitem[{Matthews} et~al., 2018]{MathewsGP}
{Matthews}, A. G. d.~G., {Rowland}, M., {Hron}, J., {Turner}, R.~E., and
  {Ghahramani}, Z. (2018).
\newblock {Gaussian Process Behaviour in Wide Deep Neural Networks}.

\bibitem[Murphy et~al., 2018]{murphy2018janossy}
Murphy, R.~L., Srinivasan, B., Rao, V., and Ribeiro, B. (2018).
\newblock Janossy pooling: Learning deep permutation-invariant functions for
  variable-size inputs.
\newblock {\em arXiv preprint arXiv:1811.01900}.

\bibitem[Neal, 1996]{Neal}
Neal, R.~M. (1996).
\newblock {\em Bayesian Learning for Neural Networks}.
\newblock Springer-Verlag.

\bibitem[Nystr{\"o}m, 1930]{nystrom1930praktische}
Nystr{\"o}m, E.~J. (1930).
\newblock {\"U}ber die praktische aufl{\"o}sung von integralgleichungen mit
  anwendungen auf randwertaufgaben.
\newblock {\em Acta Mathematica}.

\bibitem[Raissi et~al., 2019]{raissi2019physics}
Raissi, M., Perdikaris, P., and Karniadakis, G.~E. (2019).
\newblock Physics-informed neural networks: A deep learning framework for
  solving forward and inverse problems involving nonlinear partial differential
  equations.
\newblock {\em Journal of Computational Physics}, 378:686--707.

\bibitem[Rosasco et~al., 2010]{Belkin}
Rosasco, L., Belkin, M., and Vito, E.~D. (2010).
\newblock On learning with integral operators.
\newblock {\em J. Mach. Learn. Res.}, 11:905–934.

\bibitem[Roux and Bengio, 2007]{BengioLeRoux}
Roux, N.~L. and Bengio, Y. (2007).
\newblock Continuous neural networks.
\newblock In Meila, M. and Shen, X., editors, {\em Proceedings of the Eleventh
  International Conference on Artificial Intelligence and Statistics}.

\bibitem[Vapnik, 1998]{Vapnik1998}
Vapnik, V.~N. (1998).
\newblock {\em Statistical Learning Theory}.
\newblock Wiley-Interscience.

\bibitem[Veli{\v{c}}kovi{\'c} et~al., 2017]{velivckovic2017graph}
Veli{\v{c}}kovi{\'c}, P., Cucurull, G., Casanova, A., Romero, A., Lio, P., and
  Bengio, Y. (2017).
\newblock Graph attention networks.

\bibitem[Williams, 1996]{Williams}
Williams, C. K.~I. (1996).
\newblock Computing with infinite networks.
\newblock In {\em Proceedings of the 9th International Conference on Neural
  Information Processing Systems}, Cambridge, MA, USA. MIT Press.

\bibitem[Zhu and Zabaras, 2018]{Zabaras}
Zhu, Y. and Zabaras, N. (2018).
\newblock Bayesian deep convolutional encoder–decoder networks for surrogate
  modeling and uncertainty quantification.
\newblock {\em Journal of Computational Physics}.

\end{thebibliography}

\newpage
\onecolumn
\appendix
\section{Appendix}

\subsection{Table of Notations}\label{table:notation}
\begin{table}[h]
\caption{Table of notations}
\begin{center}
\begin{tabular}{|l|l|}
\multicolumn{1}{c}{\bf Notation} 
&\multicolumn{1}{c}{\bf Meaning}\\
\hline 
{\bf PDE} &\\
$a \in \A$  & The input coefficient functions \\
$u \in \U$  & The target solution functions \\
$D \subset \R^d$  & The spatial domain for the PDE \\
$x \in D$  & Points in the the spatial domain \\
$P_K$ & $K$-point discretization of $D$\\
$\mathcal{F}: \A \to \U$  & The operator mapping the coefficients to the solutions\\
$\mu$ & A probability measure where $a_j$ sampled from.\\
\hline
{\bf Graph Kernel Networks} &\\
$\kappa : \R^{2(d+1)} \to \R^{n \times n}$  & The kernel maps $(x,y,a(x),a(y))$ to a $n \times n$ matrix\\
$\phi$  & The parameters of the kernel network $\kappa$ \\
$t = 0,\ldots,T$  & The time steps  \\
$\sigma $  & The activation function  \\
$v(x) \in \R^n$  & The neural network representation of $u(x)$ \\
$\nu$ & A Borel measure for $x$\\
$ a_{\epsilon}$  & Gaussian smoothing of $a$\\
$\nabla a_{\epsilon}$ & The gradients of $ a_{\epsilon}$\\
$W, P, Q, p, q$ & Learnable parameters of the networks.\\
\hline 
{\bf Hyperparameters} &\\
$N$  & The number of training pairs \\
$N'$  & The number of testing pairs \\
$s$  & The underlying resolution of training points \\
$s'$  & The underlying resolution of testing points \\
$K=s^2$  & Total number of nodes in the grid \\
$m$  & The number of sampled nodes per training pair \\
$m'$  & The number of sampled nodes per testing pair \\
$l$  & The times of re-sampling per training pair \\
$l'$  & The times of re-sampling per testing pair \\
$r$  & The radius of the ball of kernel integration for training \\
$r'$  & The radius of the ball of kernel integration for testing \\
\hline
\end{tabular}
\end{center}
\end{table}

\subsection{Nystr\"om Approximation}\label{app:proof}

\begin{proof}[Proof of Proposition \ref{prop:montecarlo}]
Let \(\{y_j\}_{j=1}^m\) be an i.i.d. sequence with \(y_j \sim \nu\). Define \(\kappa_y = \kappa(\cdot,y)\) for any \(y \in D\). Notice that 
by the reproducing property,
\begin{align*}
\E_{y \sim \nu} [\mathds{1}_{B(\cdot,r)}(\kappa_y \otimes \kappa_y)f] &= \E_{y \sim \nu} [\mathds{1}_{B(\cdot,r)} \langle \kappa_y, f \rangle \kappa_y] \\
&= \int_D \mathds{1}_{B(\cdot,r)} \langle \kappa_y, f \rangle \kappa_y \: \nu(dy) \\
&= \int_D \mathds{1}_{B(\cdot,r)} \kappa(\cdot,y) f(y) \: \nu(dy) \\
&= \int_{B(\cdot,r)} \kappa(\cdot,y) f(y) \: \nu(dy)
\end{align*}
hence
\[T = \E_{y \sim \nu} [\mathds{1}_{B(\cdot,r)}(\kappa_y \otimes \kappa_y)]\]
and similarly 
\[T_m = \frac{1}{m} \sum_{j=1}^m \mathds{1}(y_j \in B(\cdot,r)) (\kappa_{y_j} \otimes \kappa_{y_j}).\]
Define \(T^{(j)} := \mathds{1}(y_j \in B(\cdot,r)) (\kappa_{y_j} \otimes \kappa_{y_j})\) for any \(j \in \{1,\dots,m\}\) and \(T_y := \mathds{1}_{B(\cdot,r)}(\kappa_y \otimes \kappa_y)\) for any \(y \in D\), noting that
\(\E_{y \sim \nu} [T_y] = T\) and \(\E [T^{(j)}] = T\). Further we note that
\[\E_{u \sim \nu} \|T_y\|^2_{HS} \leq \E_{y \sim \nu} \|\kappa_y\|^4 < \infty\]
and, by Jensen's inequality,
\[\|T\|^2_{HS} \leq \E_{y \sim \nu} \|T_y\|^2_{HS} < \infty\]
hence \(T\) is Hilbert-Schmidt (as is \(T_m\) since it has finite rank).
We now compute,
\begin{align*}
\E \|T_m - T\|_{HS}^2 &=  \E \| \frac{1}{m} \sum_{j=1}^m T^{(j)} - T \|^2_{HS} \\
&= \E \| \frac{1}{m} \sum_{j=1}^m T^{(j)} \|^2_{HS} - 2 \langle \frac{1}{m} \sum_{j=1}^m \E [T^{(j)}], T \rangle_{HS} + \|T\|^2_{HS} \\
&= \E \| \frac{1}{m} \sum_{j=1}^m T^{(j)} \|^2_{HS} - \|T\|^2_{HS} \\
&= \frac{1}{m} \E_{y \sim \nu} \|T_y\|_{HS}^2 + \frac{1}{m^2} \sum_{j=1}^m \sum_{k \neq j}^m \langle \E[T^{(j)}], \E[T^{(k)}] \rangle_{HS} - \|T\|^2_{HS} \\
&= \frac{1}{m} \E_{y \sim \nu} \|T_y\|_{HS}^2 + \frac{m^2 - m}{m^2} \|T\|^2_{HS} - \|T\|^2_{HS} \\
&= \frac{1}{m} \big ( \E_{y \sim \nu} \|T_y\|_{HS}^2 -  \|T\|^2_{HS} \big ) \\
&= \frac{1}{m} \E_{y \sim \nu} \| T_y - T \|^2_{HS}.
\end{align*}
Setting \(C^2 = \E_{y \sim \nu} \| T_y - T \|^2_{HS}\), we now have
\[\E \|T_m - T\|_{HS}^2 = \frac{C^2}{m}.\]
Applying Jensen's inequality to the convex function \(x \mapsto x^2\) gives
\[\E \|T_m - T\|_{HS} \leq \frac{C}{\sqrt{m}}.\]
\end{proof}

\subsection{Figures of Table \ref{table:7}}\label{app:figures}
Figure \ref{fig:r141}, \ref{fig:r211}, and \ref{fig:r421} show graph kernel network's performance for the first $5$ testing examples with $s=141, 211, 421$ respectively. As can be seen in the figures, most of the error occurs around the singularity of the coefficient $a$ (where the two colors cross).
\begin{figure}\l
    \centering
    \includegraphics[width=16cm]{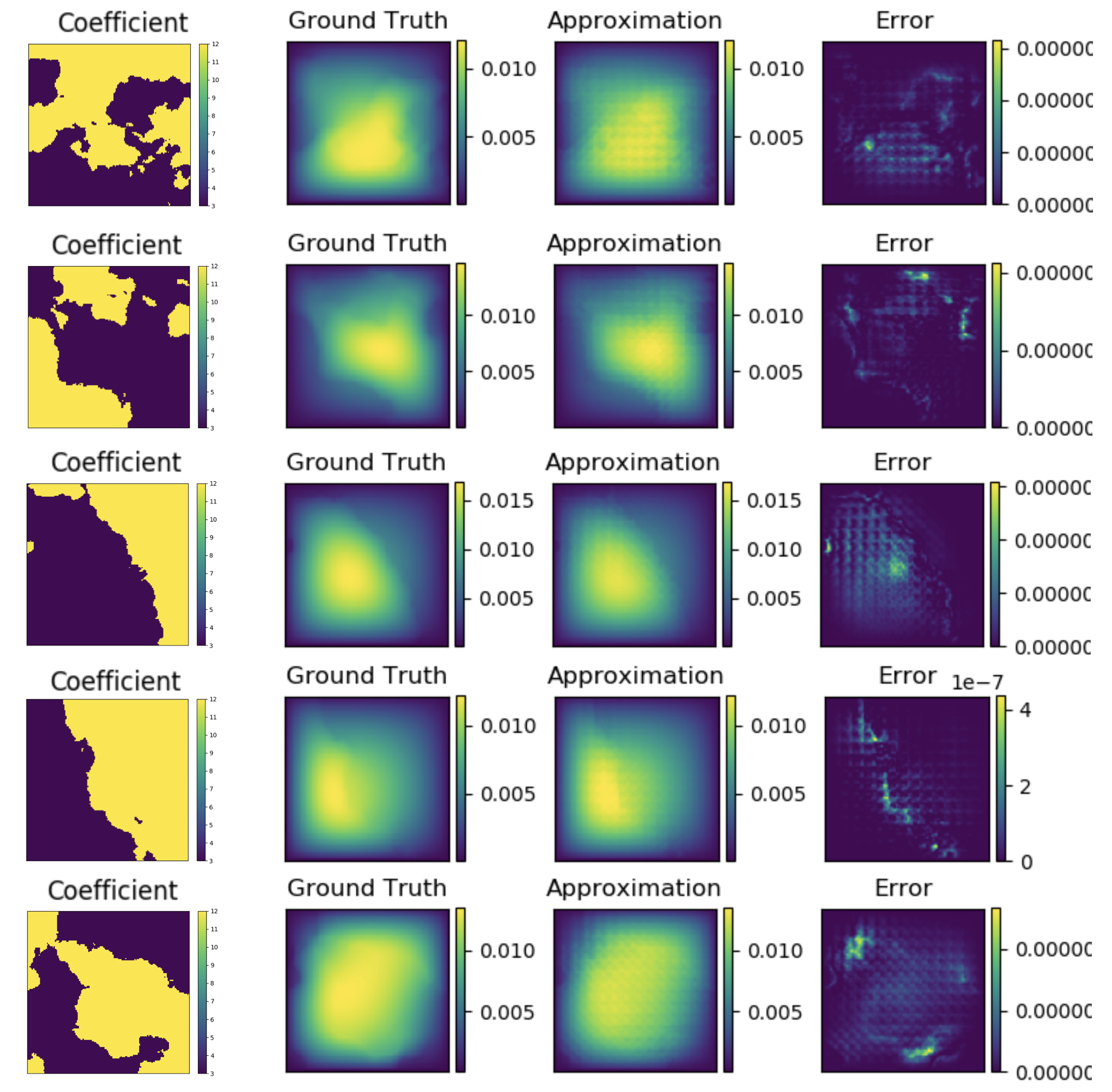}
            \caption{$s=141$}
    \label{fig:r141}
\end{figure}

\begin{figure}
    \centering
    \includegraphics[width=16cm]{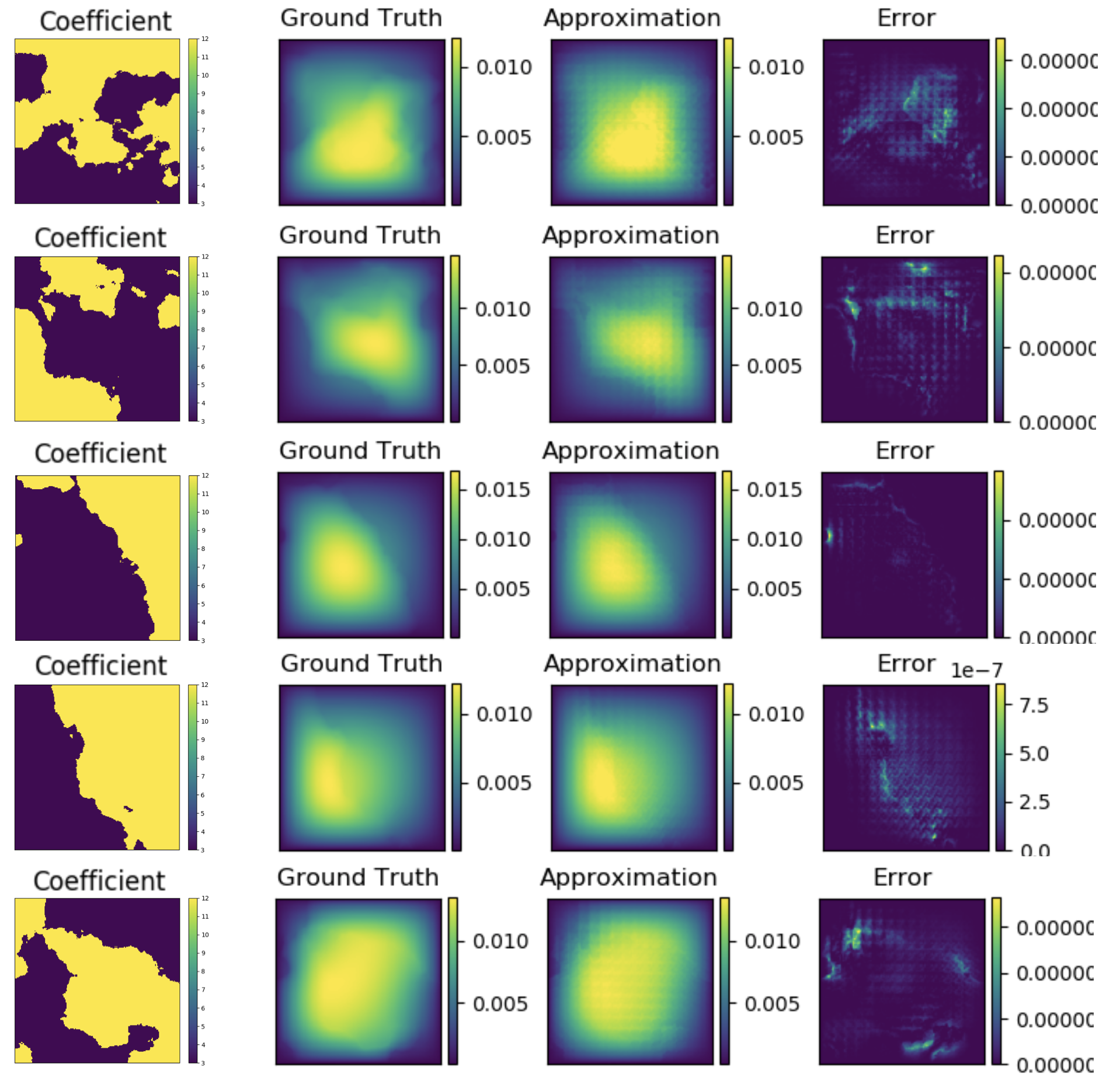}
    \caption{$s=211$}
    \label{fig:r211}
\end{figure}

\begin{figure}
    \centering
    \includegraphics[width=16cm]{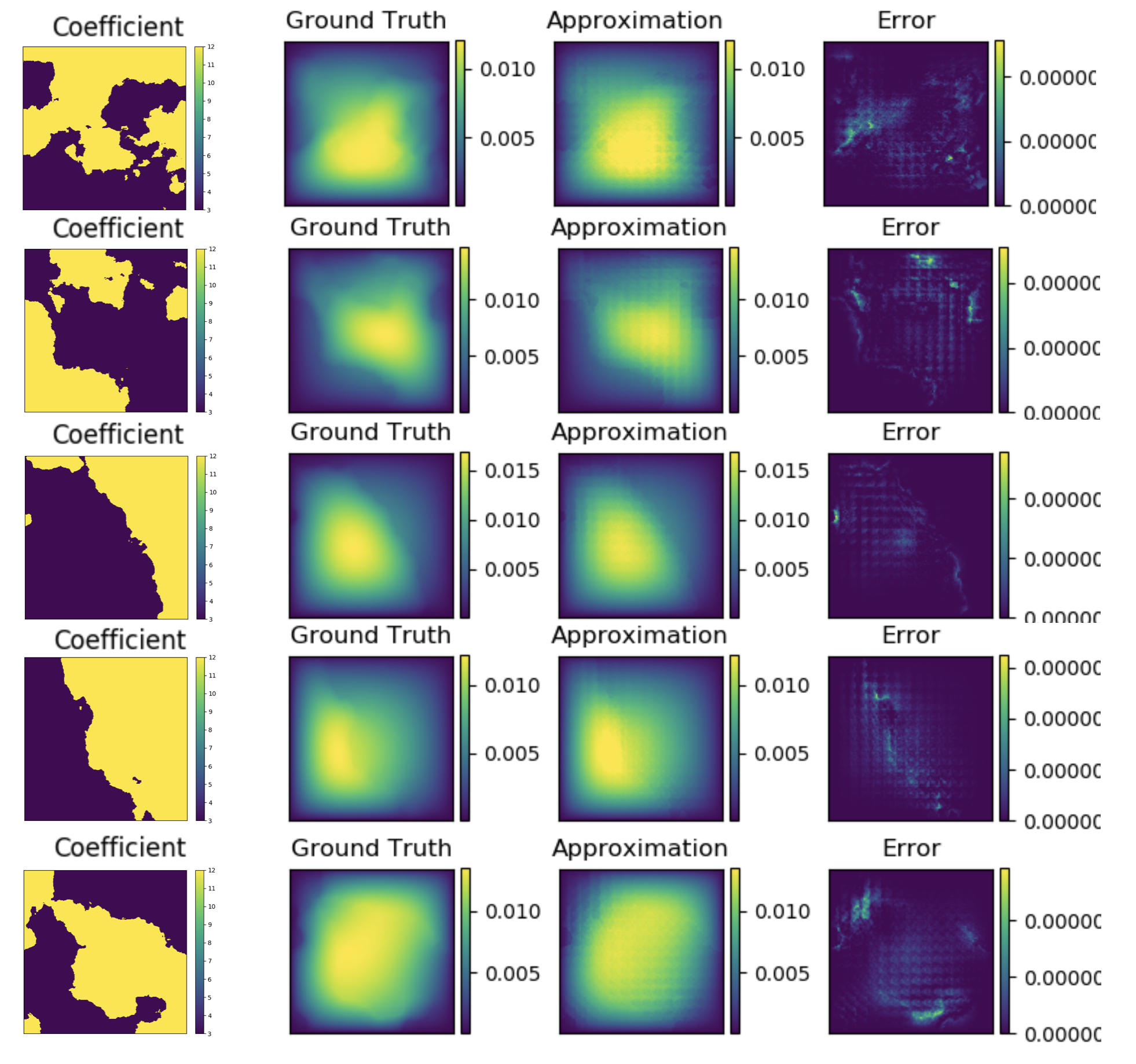}
    \caption{$s=421$}
    \label{fig:r421}
\end{figure}

\subsection{Approximating the 2D Green's function} \label{app:2D Green}
\begin{figure}
    \centering
    \includegraphics[width=15cm]{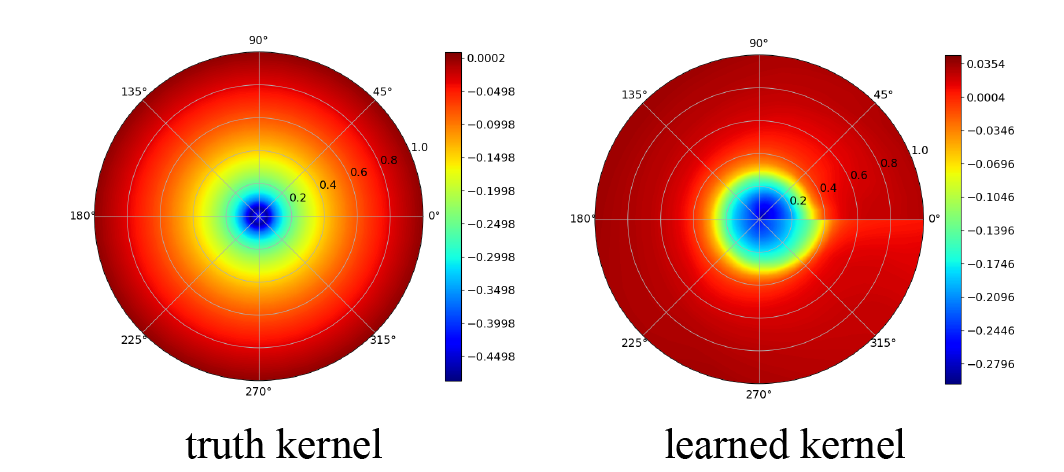}\\
    \label{fig:compareGKN}
\caption{Contour plot of the learned and truth kernel of Poisson equation on a unit disk. The contour plot shows the Green's function of \(G(0,0;\widetilde{\rho}, \widetilde{\theta})\) }
\label{fig:2D Green}
\end{figure}
In Section 3 it has been shown that the graph kernel network is capable of approximating the truth kernel of 1D Poisson equation. Here we further demonstrate the approximation power of the graph kernel network for the 2D Poisson equation by showing that the graph kernel net work is able to capture its truth Green's function. 

Although a closed-form expression for general Green's function of arbitrary 2D domain does not exist, it does exist when the partial differential equation is defined on a unit disk. Hence consider the 2D Poisson's  equation as introduced in Section 3 with $x$ defined on a unit disk \(D: x_1^2 + x_2^2 \leq 1 \). Recall the corresponding Green's function in this case can be expressed in the polar coordinate as: 
\[G_t(\rho,\theta;\widetilde{\rho}, \widetilde{\theta}) = \frac{1}{4\pi}\ln \frac{\widetilde{\rho}^2+\rho^2-2\rho\widetilde{\rho}(\cos (\widetilde{\theta})-\theta)}{\widetilde{\rho}^2\rho^2+1-2\rho\widetilde{\rho}(\cos (\widetilde{\theta})-\theta)}. \]
We first apply the graph kernel network to approximate the mapping \(f(x) \mapsto u(x)\), and then compare the learned kernel \(G\) to the truth kernel \(G_t\). Figure 7 shows contour plots of the truth and learned Green's function evaluated at \(G(0,0;\widetilde{\rho}, \widetilde{\theta})\). Again the graph kernel network is capable of learning the true Green's function, albeit the approximating error becomes larger near the singularity at zero. As a consequence, the exceptional generalization power of the graph neural network seen in Fig. 3 is explained and expected. 
\end{document}